\newtheorem{prop}{Proposition}
\begin{document}

\title{Mini-batch graphs for robust image classification}

\author{Arnab Kumar Mondal\thanks{Equal contribution.}\\
McGill University\\
{\tt\small arnab.mondal@mila.quebec}
\and
Vineet Jain\footnotemark[1]\\
Carnegie Mellon University\\
{\tt\small vineetja@cs.cmu.edu}

\and
Kaleem Siddiqi\\
McGill University\\

{\tt\small siddiqi@cim.mcgill.ca}
}

\renewcommand{\undertitle}{}

\maketitle

\begin{abstract}
 Current deep learning models for classification tasks in computer vision are trained using mini-batches. In the present article, we take advantage of the relationships between samples in a mini-batch, using graph neural networks to aggregate information from similar images. This helps mitigate the adverse effects of alterations to the input images on classification performance. Diverse experiments on image-based object and scene classification show that this approach not only improves a classifier's performance but also increases its robustness to image perturbations and adversarial attacks.
Further, we also show that mini-batch graph neural networks can help to alleviate the problem of mode collapse in Generative Adversarial Networks.
\end{abstract}

\section{Introduction}
In recent years, supervised deep learning has had wide success in computer vision problems, including in object and scene classification \cite{krizhevsky2012, places2017tpami} and image segmentation \cite{Benenson_2019_CVPR, Isaacs_2020_CVPR}. Models such as residual networks \cite{he2016deep} have become standard and are now used as encoders to learn image based representations. In most of these settings, the training data is divided into mini-batches to adjust to limitations in computational and memory resources.
Within a particular mini-batch, the input images may have varying degrees of similarity between them. Exploiting this variability during the feature encoding stage has the potential to improve the performance of downstream computer vision tasks.

Motivated by this idea, different approaches have been proposed to take advantage of the relationships between samples in a mini-batch for computer vision tasks, and in particular for image-based classification. These approaches all explicitly
encourage the embeddings in the feature space to be close to one another when the underlying images are similar, by using an extra similarity based loss term. As an example, in \cite{chen2020simple} contrastive learning is used so that different augmentations of the same image within a mini-batch have a high degree of pair-wise similarity.
In \cite{khosla2020supervised} this approach is extended to a supervised setting, such that the embeddings for instances within the same class are nearby.
In a similar vein, in \cite{wang2020affinity} the learning of representations is supervised by increasing the affinity between mini-batch samples that belong to the same class. 

In the present article, we propose a more direct approach 
to information aggregation across each mini-batch of images, which is to use graph neural networks (GNNs) for this purpose. 
Our approach is based on the construction of a graph from each mini-batch of samples, which allows information to be pooled across those with similar features, using graph convolution operations. As such, the requirement that similar images have similar embeddings is {\em implicit}, in that no additional loss term has to be optimized.
This allows for the aggregation of features during training in a manner that adjusts dynamically to each particular mini-batch ensemble of images. A perturbation analysis explains how this, in turn, affords a degree of robustness to input image alterations and adversarial attacks. We also show the connection of our mini-batch graph neural networks to mini-batch discrimination, a technique to mitigate the mode collapse problem faced during training of Generative Adversarial Networks (GANs) \cite{goodfellow2014generative}.


 
Our experiments show a consistent improvement over the baseline and other related approaches across multiple architectures and datasets for the particular task of image classification. They also reveal the robustness of our proposed model against input perturbations and adversarial attacks. Our proposed mini-batch graph learning method causes little computational overhead since it introduces only a small number of additional parameters for feature aggregation. Since the method is implemented as a modular layer (Figure \ref{fig:MBGNN}) and training in mini-batches is not specific to image classification, with minor modifications, it can be used for other vision tasks including segmentation \cite{Benenson_2019_CVPR, Isaacs_2020_CVPR}, region proposal generation \cite{ren2015faster} and relationship modeling \cite{zhang2017relationship}. 

\section{Related work}

The modeling of relationships between samples in a mini-batch has already shown promise in computer vision tasks. In recent work, \cite{wang2020affinity}, the learning of affinity between samples is supervised by optimizing an affinity mass loss. Here the pairwise affinity between {\em all} samples in the mini-batch is considered, and the loss function encourages the model to increase the affinity between samples belonging to the same class. 


A different approach to exploit relationships between samples while training in mini-batches is to use supervised contrastive learning \cite{khosla2020supervised}. Here, the normalized embeddings from samples in the same class are encouraged to be closer to one another than to the embeddings of samples from different classes. This approach is related to another self-supervised contrastive method \cite{chen2020simple}, where a model is trained to identify samples in a mini-batch that are different augmentations of the same image. These methods demonstrate improvement in image classification performance over standard networks.

Motivated by the success of the above methods, in the present article we propose and evaluate a more flexible approach, which is to use graph neural networks (GNNs) during mini-batch training to aggregate features from similar samples. Distinct from affinity graphs \cite{wang2020affinity} and contrastive learning \cite{chen2020simple,khosla2020supervised}, the use of GNNs in our approach encourages similar images to have similar embeddings in an implicit manner, while removing the requirement of the need to optimize an additional loss term.
A GNN learns node representations that reflect the local neighborhood by aggregating information across nodes.

GNNs were first proposed as deep learning architectures on graph-structured data in \cite{gori2005new,scarselli2008graph} and have since been extended to include convolution operation on graphs in \cite{bruna2013spectral,henaff2015deep, defferrard2016convolutional} or to combined locally connected regions in graphs\cite{pmlr-v48-niepert16}. 
The use of GNNs for semi-supervised classification was proposed in \cite{kipf2016semi}, following which several different variants of GNN models have been developed: Graph Attention Networks (GATs) \cite{velickovic2018graph}, models to process graphs with edge information \cite{gong2019exploiting,wang2019dynamic} and GNNs that work under low homophily \cite{zhu2020generalizing}.

GNNs have also been successfully applied to several other computer vision problems, including few-shot learning \cite{satorras2018few,rodriguez2020embedding} and semi-supervised learning \cite{zhuang2019local}. In \cite{satorras2018few}, a GNN is used to propagate label information from labeled samples in the support towards the unlabeled query image. In contrast, in \cite{rodriguez2020embedding,zhuang2019local}, a fixed graph is used to propagate embedding and label information, respectively.



\begin{figure*}[hbt!]
\begin{center}
    \includegraphics[width=0.9\textwidth]{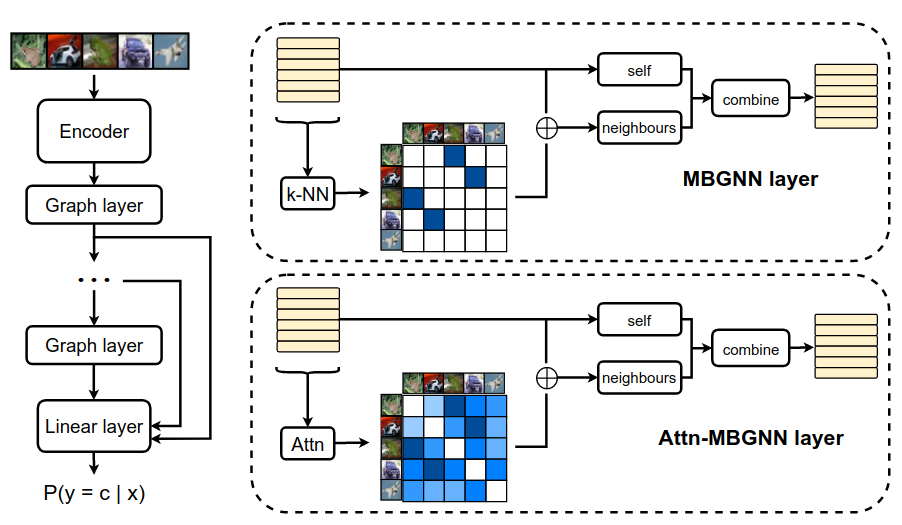}
    \caption{An illustration of the proposed MBGNN model for mini-batch learning for image classification. Encoder representations are used to create an adjacency matrix based on k-nearest neighbours. The adjacency matrix defines a graph on a mini-batch of representations, which is then processed using graph convolution operations, as detailed in Section \ref{sec:MBGNNs}.}
    \label{fig:MBGNN}
\end{center}
\end{figure*}

\section{Mini-batch Graph Neural Networks}
\label{sec:MBGNNs}

\subsection{Proposed Method}
Our network has two components: a feature encoder and a mini-batch graph neural network (MBGNN), as illustrated in Fig. \ref{fig:MBGNN}. We obtain the encoder $f_\theta(\cdot)$ by removing the final layer of a standard residual network, such as Resnet-50. Consider a typical training setup which takes a mini-batch of samples as input for a downstream classification task. We denote the input samples in a mini-batch as $\mathcal{X}=\{x_1, x_2, ..., x_B\}$, where $B$ is the batch size. The encoder provides representations for each sample, $h_i^{(0)} = f_\theta(x_i), \forall i \in B$. The MBGNN induces a graph on the set of encoded representations and processes them using graph convolution operations.
We denote the representations in the $l^\text{th}$ layer by the set $\mathcal{H}^{(l)}=\{h_1^{(l)}, h_2^{(l)}, ..., h_B^{(l)}\}$. To dynamically induce a graph on $\mathcal{H}^{(l)}$, we obtain the adjacency matrix $A^{(l)}$ by computing pairwise cosine similarity between representations and consider the top $k$ similar representations for each sample as its neighbours, removing self connections. The extent of the neighbourhood for each node can be controlled with the parameter $k$. The layer-wise propagation rule of the MBGNN in vector form, is given by,
\begin{equation}
\begin{split}
\Bar{H}^{(l)} &= H^{(l)}W^{(l)}+ b^{(l)} \\ 
H^{(l+1)} &= \sigma\big(\text{combine}\big(\Bar{H}^{(l)}, (1/k)A^{(l)} \Bar{H}^{(l)}\big)\big). 
\end{split}
\label{eq_vector_DMBGNN}
\end{equation}
where $h^{(l)}_i $ is stacked row-wise to form $H^{(l)}$, $W^{(l)}$ is the weight matrix, $b^{(l)}$ is the bias, and $\sigma(\cdot)$ is a non-linear function, usually ReLU.
In equation \eqref{eq_vector_DMBGNN}, $f_\textrm{self}^{(l)} = \Bar{H}^{(l)}$ contains the `self' information for each node, and $f_\textrm{neigh}^{(l)} = (1/k)A^{(l)} \Bar{H}^{(l)}$ contains the `neighbour' information, since it is based on the average of the encoded representations of the neighbors of each node, as reflected by the adjacency matrix $A^{(l)}$. 

There are several methods to combine the self and neighbour information in the $\text{combine}(\cdot, \cdot)$ function, which are explained below.

\underline{Concatenation}: Here the self features, $f_\textrm{self}^{(l)}$ and neighbour features $f_\textrm{neigh}^{(l)}$ are concatenated in the representation dimension. This allows the network the flexibility to learn separate weights for both.

\underline{Weighted Addition}: This is a convex combination of self and neighbour features, which forces the network to use neighbour information. For $\alpha \in [0,1]$, we have,
\begin{equation}
    \text{combine}(f_\textrm{self}^{(l)},f_\textrm{neigh}^{(l)}) = \alpha f_\textrm{self}^{(l)} + (1-\alpha)f_\textrm{neigh}^{(l)}.
\end{equation}
This reduces to a standard GCN formulation if we set $\alpha=1/{k+1}$.

\underline{Drop Features}: We propose a different method in to mitigate the effect of co-adaptation and to make the neighbours contribute meaningful information. During training, we drop the self features with probability $p$ and the neighbour features with probability $1-p$. We then make the testing phase deterministic by taking the expected output feature, which leads to a similar expression as the sum combination: 
\begin{equation}
    \text{combine}(f_\textrm{self}^{(l)},f_\textrm{neigh}^{(l)}) = p f_\textrm{self}^{(l)} + (1-p)f_\textrm{neigh}^{(l)}.
\end{equation}
We explore these different methods in our experiments.

\paragraph{Attention MBGNN}The model can also be made more expressive by using an attention mechanism while aggregating from neighbours, which we refer to as an Attn-MBGNN (see Fig. \ref{fig:MBGNN}). This is done by changing the calculation of the adjacency matrix $\mathcal{A}^{(l)}$ to incorporate an attention coefficient between nodes. Let $\alpha^{n(l)}_{ij}$ denote the attention coefficient of node $j$ to $i$ for the $n$-th attention head, which can be computed as,

\begin{equation}
    \alpha_{ij}^{n(l)}=\frac{\exp\big(\phi\big(\psi(W_{n}^{(l)} h_i^{(l)}, W_{n}^{(l)}h_j^{(l)})\big)\big)}{\sum_{m \in \mathcal{N}(i)} \exp\big(\phi\big(\psi(W_{n}^{(l)}h_i^{(l)}, W_{n}^{(l)}h_{m}^{(l)})\big)\big)}.
\label{eq_attention}
\end{equation}
Here $\phi(\cdot)$ is a neural network, $W_{n}^{(l)}$ is a trainable matrix and $\psi$ is absolute difference. This is similar to the attention coefficient used in GAT \cite{velickovic2018graph}. To form the weighted adjacency matrix, we first follow the same process as for MBGNN by considering top $k$ similar features based on cosine similarity to get the neighbourhood $\mathcal{N}(i)$ for each node $i$. To get the weighted adjacency matrix for the $n$-th head we set $\mathcal{A}^{n(l)}_{ij} = \alpha^{n(l)}_{ij}, \forall j \in \mathcal{N}(i)$ and  $\mathcal{A}^{n(l)}_{ij} = 0, \forall j \notin \mathcal{N}(i)$. We also remove self-connections by setting $\mathcal{A}^{n(l)}_{ii}=0, \forall i$. 
The vectorized layer-wise propagation rule of the Attn-MBGNN with $N$ attention heads then becomes
\begin{equation}
\begin{split}
\Bar{H}^{(l)} &= H^{(l)}W^{(l)}+ b^{(l)} \\ 
H^{(l+1)} &= \sigma\big(\text{combine}\big(\Bar{H}^{(l)},\frac{1}{N}\sum_{n=1}^{N} \mathcal{A}^{n(l)} \Bar{H}^{(l)}\big)\big).
\end{split}
\label{eq_vector_DAMBGNN}
\end{equation}

Once the intermediate representations (upto $L$ layers) are obtained either using MBGNN or Attn-MBGNN, we get the final logits for the $i$-th input in the batch by,
\begin{equation}
    \ell_i = W_{final}(h_i^{(1)} \,\|\, h_i^{(2)} \,\|\, \ldots \,\|\, h_i^{(L)}),
\end{equation}
where $\|$ denotes concatenation in the feature dimension. This captures the local and global information separately and takes a weighted combination. This design choice has been shown to increase the representation power of GNNs \cite{xu2018representation}, by leveraging different neighbourhood ranges to better enable structure-aware representation. We can now compute the class probabilities by taking the softmax, $ p(y_i=k | x_i) = \frac{\exp(\ell_{k})}{\sum_{j=1}^{C} \exp(\ell_{j})}$ where $C$ is the total number of classes. We use cross entropy loss to train the encoder and the MBGNN model end-to-end,
\begin{equation}
    \mathcal{L} = - \sum_{i=1}^{B} \sum_{k=1}^{C} \mathbb{1}_{y_i = k} \log p(y_i=k | x_i). 
\end{equation}

\paragraph{Evaluation settings} In the \textit{transductive} setting, we predict the label for a single test image within a mini-batch graph consisting of one test image and multiple training images. In the \textit{inductive} setting, we construct a graph on a full mini-batch consisting of test images and predict labels for all samples. We include results under both the settings.

\subsection{Robustness to Input Perturbations}
We now show how our MBGNN provides robustness in the face of perturbations in the input, which we have argued is a desirable property for downstream computer vision tasks.

\begin{prop}
\label{prop:robustness}
Consider a neural network comprised of an encoder and a fully connected layer, $g_{sup}(\cdot)$, and a MBGNN neural network consisting of an encoder followed by a graph convolutional layer where each node has $k$ neighbours, by $g_{mbgnn}(\cdot)$. For transductive prediction, consider an input sample $x$, with some perturbation $\Delta x$. Let the associated perturbations in logits be $\Delta y_{sup} = g_{sup}(x+\Delta x) - g_{sup}(x)$ and $\Delta y_{mbgnn} = g_{mbgnn}(x+\Delta x) - g_{mbgnn}(x)$. Then, $\Delta y_{mbgnn} = \frac{1}{k+1} \Delta y_{sup}$.
\end{prop}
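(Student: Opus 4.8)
The plan is to exploit the defining feature of the transductive setting: the mini-batch graph contains a single perturbed test node together with several \emph{fixed} training nodes, so that a perturbation of the input propagates through only the self-branch of the graph convolution while the neighbour branch stays frozen. I would set up both heads as single linear layers on the same encoder $f_\theta$, sharing the same weight matrix $W$ and bias $b$, with the graph head additionally averaging over neighbours. Writing $\bar h_i = W f_\theta(x_i) + b$ for the linearly transformed encoding of node $i$, and taking the combine function to be the weighted addition with $\alpha = 1/(k+1)$ (which the text already notes reduces to the standard GCN), the logit output at the test node $i$ becomes
\begin{equation*}
g_{mbgnn}(x_i) = \frac{1}{k+1}\Big(\bar h_i + \sum_{j\in\mathcal{N}(i)} \bar h_j\Big),
\end{equation*}
since $\alpha f_\textrm{self} = \tfrac{1}{k+1}\bar h_i$ and $(1-\alpha) f_\textrm{neigh} = \tfrac{k}{k+1}\cdot\tfrac{1}{k}\sum_{j}\bar h_j = \tfrac{1}{k+1}\sum_j \bar h_j$. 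Here I drop the final activation, as the logit-producing layer carries no nonlinearity before the softmax.

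Next I would perturb $x \mapsto x + \Delta x$ at the test node only, holding the training nodes fixed, and assume the neighbour set $\mathcal{N}(i)$ is unchanged by the perturbation so that the adjacency is stable. Under this assumption every $j \in \mathcal{N}(i)$ is a training image, hence $\bar h_j$ is frozen and $\Delta \bar h_j = 0$; the bias $b$ cancels in the difference. Subtracting the unperturbed output leaves only the self-term:
\begin{equation*}
\Delta y_{mbgnn} = \frac{1}{k+1}\,\Delta\bar h_i, \qquad \Delta\bar h_i = W\big(f_\theta(x+\Delta x) - f_\theta(x)\big).
\end{equation*}
For the supervised baseline, $g_{sup}(x) = W f_\theta(x) + b$ passes the same encoder perturbation through the fully connected layer undamped, so $\Delta y_{sup} = \Delta\bar h_i$. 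Comparing the two expressions gives $\Delta y_{mbgnn} = \tfrac{1}{k+1}\Delta y_{sup}$, which is the claim.

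The hard part will be justifying the stability of $\mathcal{N}(i)$: because the adjacency is built dynamically from cosine similarities, a perturbation could in principle change which $k$ training images are selected, introducing a discontinuous jump that would break the clean linear cancellation. I would argue that for sufficiently small $\Delta x$ the top-$k$ ranking is preserved — the similarity map is continuous and generically the $k$-th and $(k{+}1)$-th similarities are strictly separated — so the neighbour set is locally constant and the argument above applies verbatim. A secondary technical point is the suppression of activations: if one insists on a general $\sigma$, the identity $\Delta y_{mbgnn} = \tfrac{1}{k+1}\Delta y_{sup}$ should be read as a first-order (Jacobian) statement, with the factor $\tfrac{1}{k+1}$ emerging from the neighbour-averaging normalization exactly as above.
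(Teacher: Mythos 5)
Your proof follows essentially the same route as the paper's: both use a shared encoder and identical final weight matrix for the two heads, apply the standard GCN update with $\tfrac{1}{k+1}$ normalization (your weighted-addition combine with $\alpha = 1/(k+1)$ is exactly this), hold the neighbour features of the frozen training images fixed in the transductive setting, and let the neighbour sums cancel in the difference to produce the $\tfrac{1}{k+1}$ factor. If anything, you are more careful than the paper, which silently assumes the dynamically computed neighbourhood is unchanged by the perturbation and omits the nonlinearity; your explicit treatment of top-$k$ stability and the first-order reading under a general $\sigma$ strengthens, but does not alter, the argument.
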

\begin{proof}
Let the encoder $e(\cdot)$ output a vector $e(x)$ for a given image $x$, and denote a given batch of samples as $\{x_1, x_2, ..., x_B\}$. 
For the standard supervised model, denoting the weight matrix of the final layer as $W$, we get the perturbation in the final pre-softmax logits, for perturbation in input $x_B$, as $\Delta y_{sup} = W^Te(x_B+\Delta x_B) - W^Te(x_B) = W^T[e(x_B+\Delta x_B) - e(x_B)]$. Now, consider an MBGNN model in which each sample is connected to $k$ other samples in the mini-batch and has the same weight matrix $W$. For an MBGNN with self connections, using the standard GCN update rule, we have $\Delta y_{mbgnn} = \frac{1}{k+1}W^T[e(x_{B}+\Delta x_{B}) + \sum_{j\in \mathcal{N}(B)}e(x_j)] - \frac{1}{k+1}W^T[e(x_{B}) + \sum_{j\in \mathcal{N}(B)}e(x_j)] = \frac{1}{k+1}W^T[e(x_{B}+\Delta x_{B}) - e(x_{B})] = \frac{1}{k+1} \Delta y_{sup}$  
\end{proof}

The above proposition effectively states that for any perturbation in the input, the corresponding perturbation in the output is inversely proportional to the number of neighbours, for each node in the mini-batch graph, when using MBGNNs as opposed to standard networks. Similarity based aggregation aids in transductive inference, where a prediction is made for a single corrupted test image within a mini-batch of randomly sampled uncorrupted training set images. In Section \ref{sec:robustness} we carry out experiments to verify this property of robustness to image perturbations.  

We also test the robustness of the model against black-box adversarial attacks. These adversaries craft perturbations which cause the model to classify legitimate looking input images incorrectly. Black-box adversaries do not have access to the model parameters and the gradients, and must query the model to observe the output class probabilities. They query the model repeatedly with a chosen image, but perturbing it with each iteration, based on the results of the previous query.
A model which has a lower attack success rate and/or requires a higher number of queries on average against an adversary before a successful attack, is considered more robust. We test the MBGNN model against two recently proposed and popular black-box adversarial attack methods, simBA \cite{Guo2019SimpleBA} and Bandits-TD \cite{ilyas2018prior}. We choose these two methods since they use different methodologies - simBA uses local search in order to craft adversarial perturbations whereas Bandits-TD estimates the gradient by repeatedly querying the model to create the adversarial input. The results of these experiments are also provided in Section \ref{sec:robustness}.

\subsection{Connection to Mini-batch Discrimination}

Generative Adversarial Networks (GANs), first introduced in \cite{goodfellow2014generative}, are a family of generative models that are used in several computer vision tasks including high resolution image generation\cite{wang2018high}, image super-resolution\cite{ledig2017photo}, domain adaptation\cite{isola2017image,zhu2017unpaired} and image compression \cite{agustsson2019generative}. 
GANs suffer from the problem of mode collapse, where the generated samples belong to a few modes in the dataset while still being successful at fooling the discriminator. This leads to a lack of diversity in the generated samples. One way to mitigate mode collapse in GANs is to use a technique known as {\em mini-batch discrimination} \cite{DBLP:journals/corr/SalimansGZCRC16}. Here, instead of the discriminator being required to label individual samples as fake or real, it discriminates between an entire mini-batch of generated or real samples. 

As it turns out, our proposed Attn-MBGNN can be interpreted as an extension of mini-batch discrimination to the classification task. To establish this connection, we present a variation of our MBGNN, which we refer to as a Mode Collapse MBGNN (MC-MBGNN). Rather than aggregating node features weighted by attention, as in the case of Attn-MBGNN layers, we aggregate the edge features in this model. Note that these edge features capture similarity and can be interpreted as the unnormalized attention values.  

In MC-MBGNN, we modify the discriminator network by extracting features from real/fake images using the encoder, $f_\theta(\cdot)$, and denote them by $\{h_1, h_2, ..., h_{B}\}$ where $h_i = f_\theta(x_i)$ and $B$ is the batch size. We induce complete graphs for both the batch of generated and real samples, and process them separately. 
The output for the $n$-th aggregated edge feature for the $i$-th sample in the mini-batch, which is computed in a manner similar to aggregating unnormalized attention weights for the $n$-th head in Attn-MBGNN, is given by:
\begin{equation}
    \Bar{h}_{i}^{n} = \sum_{j=1}^{B}\phi(\psi(W_nh_i,W_nh_j)),
\end{equation}
where $\phi(\cdot)$ is a neural network, $W_n$ is a trainable matrix, and $\psi$ can either be concatenation or absolute difference. We can then concatenate the aggregated edge features with the independent node features and use a final layer to get the $i$-th scalar output: $ o_i = \sigma(W_{final}(h_i\,\|\,\Bar{h}_{i}^1\,\|\,....\,\|\,\Bar{h}_{i}^N)).$ where $\sigma(\cdot)$ is the sigmoid function. 
If we restrict $\psi(\cdot)$ to absolute difference, $\phi(\cdot)$ to a fixed $\exp{}(-x)$ function and take all the $\Bar{h}_{i}^n$ as scalars then the model reduces to standard mini-batch discrimination \cite{DBLP:journals/corr/SalimansGZCRC16}. This shows how MBGNNs are connected to mini-batch discrimination. Our MC-MBGNN model is more expressive than mini-batch discrimination, and can significantly mitigate mode collapse in GANs, as demonstrated by the experiments in Section \ref{sec:modecollapse}.

\section{Experiments}

\subsection{Datasets}
We perform experiments on two standard computer vision object classification datasets, CIFAR-10 and CIFAR-100. In order to expand the scope of the experiments to include scene classification, which is more complex, we also test our models on the MIT 67 scene dataset. For our GAN experiments we use the CIFAR-10 and the CelebA datasets.\\
\textbf{CIFAR-10} consists of 60,000 colour images in 10 classes, with 6,000 images per class. We use the standard split with 50,000 training images and 10,000 test images. Each image is 32 $\times$ 32.\\
\textbf{CIFAR-100} is similar to CIFAR-10, except that it has 100 classes, which significantly increases the difficulty of the classification task. Each class contains 600 images with 500 training images and 100 testing images per class. Each image is 32 $\times$ 32.\\
\textbf{MIT 67} contains indoor scene images belonging to 67 categories, with a total of 15620 images. The number of images varies across categories, but there are at least 100 images per category. For our experiments we reduce the image size by approximately a factor of 10 in each dimension to 64 $\times$ 64, to ease the computational resources. As a consequence the scene classification task becomes harder.\\
\textbf{CelebA} is a large-scale face attributes dataset containing 10,177 identities with 202,599 colour face images in total. Each image is 64$\times$64.

\begin{table*}[]
\small
\begin{tabular}{|l|c|c|c|c|c|c|}
\hline
 & \multicolumn{2}{c|}{\bf CIFAR 10} & \multicolumn{2}{c|}{\bf CIFAR 100} & \multicolumn{2}{c|}{\bf MIT 67} \\ \cline{2-7}
 \bf Model & \bf Inductive     & \bf Transductive  & \bf Inductive    & \bf Transductive    & \bf Inductive   & \bf Transductive  \\ \hline
Supervised vanilla           & \multicolumn{2}{c|}{$94.69 \scriptstyle{\pm 0.17}$} & \multicolumn{2}{c|}{$74.48 \scriptstyle{\pm 0.25}$} &  \multicolumn{2}{c|}{$64.48 \scriptstyle{\pm 0.27}$}             
\\
Supervised contrastive \cite{khosla2020supervised}      & \multicolumn{2}{c|}{$94.85 \scriptstyle{\pm 0.13}$}  &\multicolumn{2}{c|}{$74.80 \scriptstyle{\pm 0.22}$} &         \multicolumn{2}{c|}{$65.10 \scriptstyle{\pm 0.16}$}        
\\
Affinity supervision \cite{wang2020affinity}      & \multicolumn{2}{c|}{$94.45 \scriptstyle{\pm 0.35}$}  &\multicolumn{2}{c|}{$74.50 \scriptstyle{\pm 0.59}$} & \multicolumn{2}{c|}{$64.60 \scriptstyle{\pm 0.30}$}        
\\ \hline
MBGNN (concat)            &  $95.19 \scriptstyle{\pm 0.23}$  &   $95.21 \scriptstyle{\pm 0.21} $  &  $75.22 \scriptstyle{\pm 0.17}$   &      $75.15 \scriptstyle{\pm 0.20}$  &  $65.80 \scriptstyle{\pm 0.21}$       &  $65.81 \scriptstyle{\pm 0.19}$  \\
MBGNN (sum)               & $95.02 \scriptstyle{\pm 0.19} $  &   $95.02 \scriptstyle{\pm 0.13}$   &      $75.18 \scriptstyle{\pm 0.21}$   &    $75.20 \scriptstyle{\pm 0.15}$   & $65.68 \scriptstyle{\pm 0.17}$       &  $65.70 \scriptstyle{\pm 0.18}$    \\
MBGNN (dropfeat)          & $\bf 95.24 \scriptstyle{\pm 0.19}$   &   $\bf 95.22 \scriptstyle{\pm 0.25}$  & $75.21 \scriptstyle{\pm 0.17}$  &     $75.25 \scriptstyle{\pm 0.19}$  &      $65.82 \scriptstyle{\pm 0.18}$       &  $65.84 \scriptstyle{\pm 0.20}$
\\ \hline
Attn-MBGNN (concat)       & $95.14 \scriptstyle{\pm 0.14}$   &   $95.12 \scriptstyle{\pm 0.21}$  & $75.16 \scriptstyle{\pm 0.22}$  &     $75.19 \scriptstyle{\pm 0.25}$ &  $\bf  65.86 \scriptstyle{\pm 0.15}$       & $\bf 65.87 \scriptstyle{\pm 0.18}$
\\
Attn-MBGNN (sum)          & $94.95 \scriptstyle{\pm 0.18}$   &   $94.98 \scriptstyle{\pm 0.15}$   & $75.23 \scriptstyle{\pm 0.15}$  &     $75.25 \scriptstyle{\pm 0.17}$ & $65.72 \scriptstyle{\pm 0.19}$       &  $65.73 \scriptstyle{\pm 0.17}$
\\
Attn-MBGNN (dropfeat)     & $95.05 \scriptstyle{\pm 0.25} $    &  $95.06 \scriptstyle{\pm 0.22}$  & $\bf 75.29 \scriptstyle{\pm 0.11}$  &    $\bf 75.26 \scriptstyle{\pm 0.18}$ &   $\bf 65.86 \scriptstyle{\pm 0.22}$       &  $65.85 \scriptstyle{\pm 0.20}$        
\\ \hline
\end{tabular}
\caption{Image classification results using a Resnet-50 encoder. The results are shown with 95\% confidence intervals over 5 runs. The architectures are trained using a batch size of 256 and with $k$ = 16 for CIFAR-10, and $k$ = 4 for CIFAR-100 and MIT67. We provide results for different combine modes of our single layered mini-batch graph based models (rows 4-9). Additional results using a Wide Resnet-28-10 encoder are in the supplementary material.}
\label{table:supervised_resnet}
\end{table*}

\begin{figure*}[hbt!]
\begin{center}
    \begin{subfigure}[b]{0.24\textwidth}
        \centering
        \includegraphics[width=\textwidth]{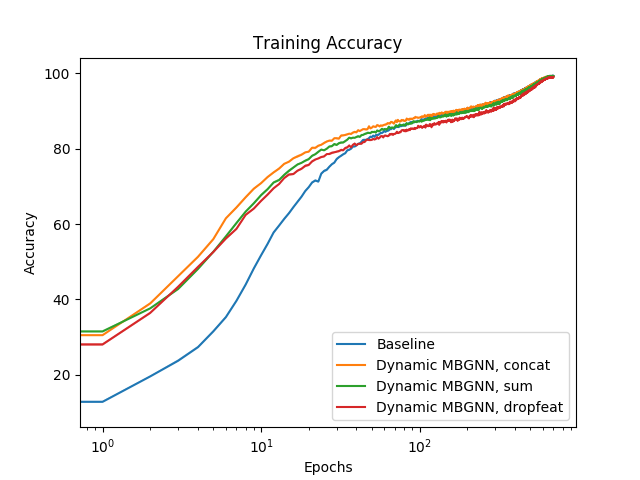}
        \caption{}
    \end{subfigure}
    \hfill
    \begin{subfigure}[b]{0.24\textwidth}
        \centering
        \includegraphics[width=\textwidth]{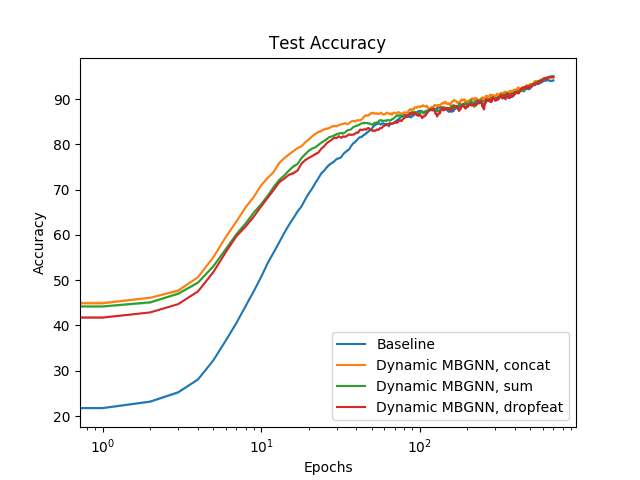}
        \caption{}
    \end{subfigure}
    \hfill
    \begin{subfigure}[b]{0.24\textwidth}
        \centering
        \includegraphics[width=\textwidth]{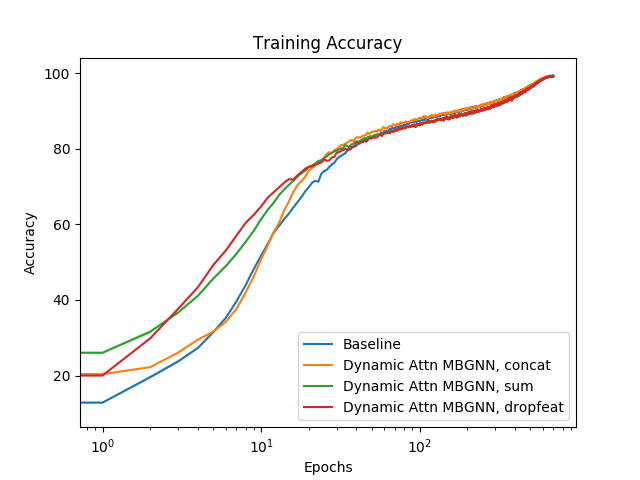}
        \caption{}
    \end{subfigure}
    \hfill
    \begin{subfigure}[b]{0.24\textwidth}
        \centering
        \includegraphics[width=\textwidth]{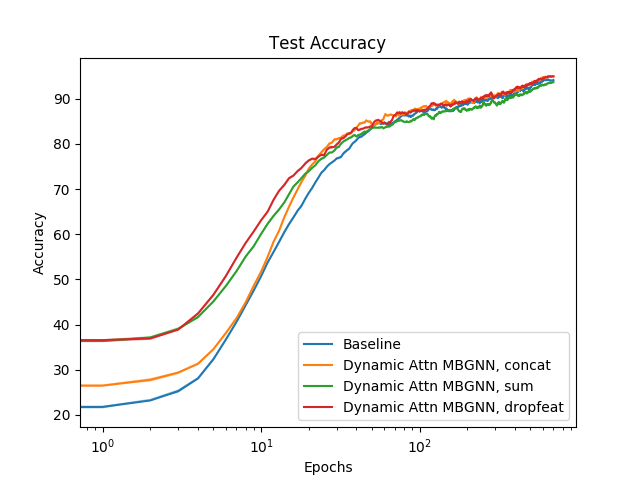}
        \caption{}
    \end{subfigure}
    \caption{We encourage the reader to zoom-in on the PDF. Plots showing training and test accuracy versus epochs, for different models on the CIFAR-10 dataset using ResNet-50 encoder. The epochs axis is on a log scale. (a) training accuracy for MBGNNs, (b) test accuracy for MBGNNs, (c) training accuracy for attention MBGNNs, (d) test accuracy for attention MBGNNs.}
    \label{fig:train_plot}
\end{center}
\end{figure*}

\subsection{Image classification results} 
\label{sec:image_classification}
\textbf{Baselines.} We use a ResNet-50 model trained using cross-entropy loss as the first baseline. We also reproduce the results of \cite{khosla2020supervised} with a batch size of 256 and reproduce results of \cite{wang2020affinity} with our ResNet-50 baseline. We focus on relative improvement between our proposed model and these baselines.

For our proposed model, we use a ResNet-50 network without the final layer as an encoder and add an MBGNN with different combine methods. The entire model is trained end-to-end using cross-entropy loss. We provide the results for both MBGNN and Attn-MBGNN (with 95\% confidence intervals over 5 runs) with all the combine options in Table \ref{table:supervised_resnet}, under both the inductive and the transductive settings. Our models perform better than all three baselines across the datasets considered. We observe that there is no significant difference in test accuracy between the inductive and transductive settings. Among the different combine methods, the drop feature seems to perform best in general; however, the difference in performance between these variations is small. Also, Attn-MBGNN is slightly better, owing to the model's higher expressivity due to learnable attention weights.
Figure \ref{fig:train_plot} compares the training and test accuracy of all the models, with the standard supervised baseline versus the number of epochs on the CIFAR-10 dataset. The MBGNN models train faster than the standard network, giving a significant performance difference during early parts of the training process.

\subsection{Ablation study: $\mathbf{k}$ and batch size}
The most important parameters for graph-based learning are the size of the graph and the degree of each node, i.e., the neighborhood size $k$. Here we use the ResNet-50 encoder and the CIFAR-10 dataset for all our experiments. We expect the best value for $k$ to be close to the batch size divided by the number of classes, since this is the expected number of samples in a mini-batch having the same class label. For CIFAR-10 with a batch size of $256$, this value is $\sfrac{256}{10} \approx 25$. We also expect performance to improve with larger batch sizes, since this translates to larger graphs, and hence more samples per class.
We restrict ourselves to inductive predictions and provide the results for the drop feature model in Tables \ref{table:ablation_k} and \ref{table:ablation_BS}. 
The results match our prediction. First, for CIFAR-10 the best performance is indeed when $k$ is close to 25. Second, classification accuracy improves with larger batch sizes.

\begin{table*}[hbt!]
\begin{center}
\begin{tabular}{|l|c|c|c|c|c|c|c|}
\hline
\textbf{Model} & \textbf{k=4} & \textbf{k=8} & \textbf{k=16} &\textbf{k=32} & \textbf{k=64} & \textbf{k=128} & \textbf{k=256} \\ \hline
MBGNN (dropfeat)            & 94.91 & 95.10 & 95.24 & 95.22 & 95.02 & 94.11 & 92.88
\\
Attn-MBGNN (dropfeat)       &  94.79 & 94.88 & 95.05 & 94.97 & 94.90 & 94.43 & 93.52
\\
\hline
\end{tabular}
\caption{Image classification results on CIFAR-10 using a Resnet-50 encoder and MBGNN, while varying the neighbourhood size $k$. All the networks are trained with a batch size of 256 using a single layer GNN.}
\label{table:ablation_k}
\end{center}
\end{table*}

\begin{table*}[hbt!]
\begin{center}
\begin{tabular}{|l|c|c|c|c|c|}
\hline
\textbf{Model} & \textbf{BS=32} & \textbf{BS=64} & \textbf{BS=128} & \textbf{BS=256} & \textbf{BS=512} \\ \hline
MBGNN (dropfeat)           & 94.34 & 94.88 & 95.10 & 95.24 & 95.28
\\
Attn-MBGNN (dropfeat)      & 94.38 & 94.82 & 94.95 & 95.05 & 95.12
\\ \hline
\end{tabular}
\caption{Image classification results on CIFAR-10 using a Resnet-50 encoder and an MBGNN, with different batch sizes. All the networks are trained with a neighbourhood size of $k$ = 16, using a single layer GNN.}
\label{table:ablation_BS}
\end{center}
\end{table*}

\begin{figure*}[hbt!]
\begin{center}
    \begin{subfigure}[b]{0.28\textwidth}
        \centering
        \includegraphics[width=\textwidth]{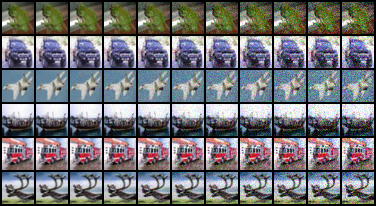}
        \vspace{0.14cm}
        \caption{}
    \end{subfigure}
    \begin{subfigure}[b]{0.28\textwidth}
        \centering
        \includegraphics[width=\textwidth]{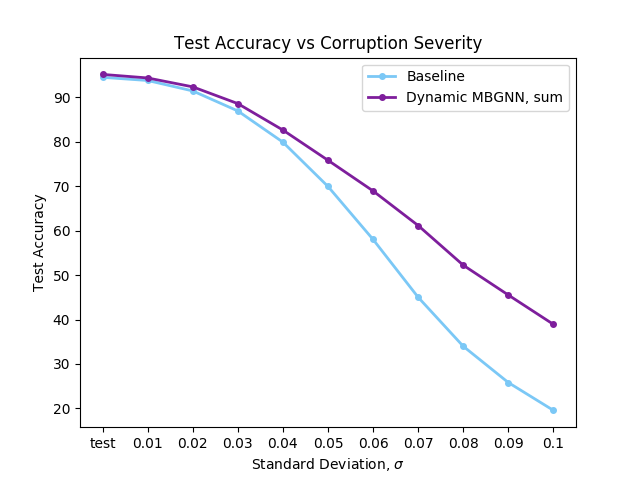}
        \caption{}
    \end{subfigure}
    \begin{subfigure}[b]{0.28\textwidth}
        \centering
        \includegraphics[width=\textwidth]{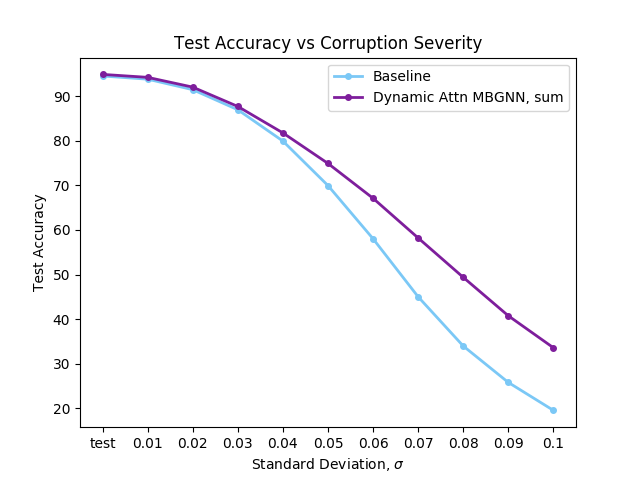}
        \caption{}
    \end{subfigure}
    \begin{subfigure}[b]{0.28\textwidth}
        \centering
        \includegraphics[width=\textwidth]{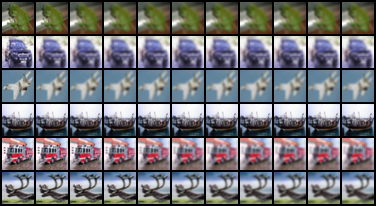}
        \vspace{0.14cm}
        \caption{}
    \end{subfigure}
    \begin{subfigure}[b]{0.28\textwidth}
        \centering
        \includegraphics[width=\textwidth]{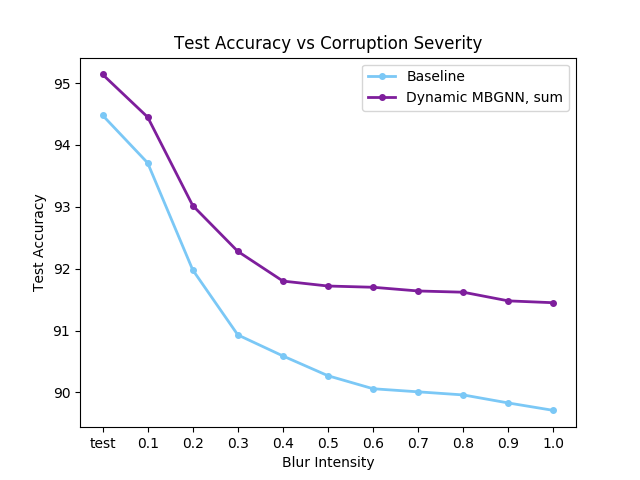}
        \caption{}
    \end{subfigure}
    \begin{subfigure}[b]{0.28\textwidth}
        \centering
        \includegraphics[width=\textwidth]{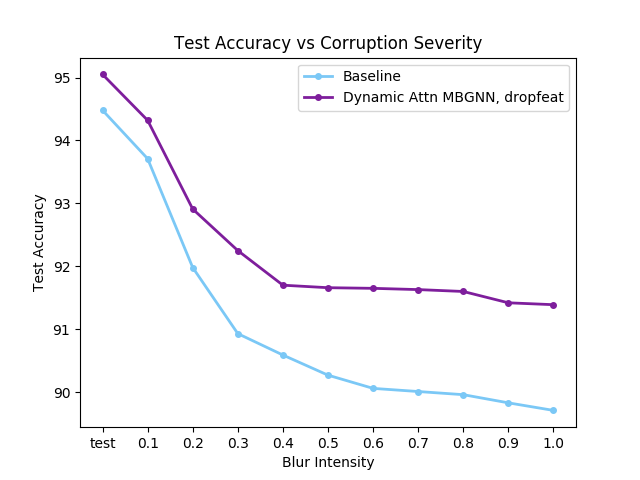}
        \caption{}
    \end{subfigure} \\
    \caption{We encourage the reader to zoom-in on the PDF. Average test accuracy at different corruption severities for pixel-wise Gaussian noise (top) and Gaussian blurring (bottom) on CIFAR10, using a ResNet-50 encoder. Models using MBGNNs (purple) maintain higher accuracy over the range of corruption severities when compared to the baseline model (blue) and have a lower reduction in accuracy for higher corruption levels. (a) Sample images with increasing level of Gaussian noise, (b) and (c) test accuracy plots for Attn-MBGNN (sum) and Attn MBGNN (sum), (d) Sample images with increasing level of Gaussian blurring, (e) and (f) test accuracy plots for MBGNN (sum) and Attn MBGNN (sum). The plots for other models are provided in the supplementary material.}
    \label{fig:blur_plots}
\end{center}
\end{figure*}

\subsection{Robustness experiments}
\label{sec:robustness}

\begin{table*}[t]
\begin{center}
\begin{tabular}{|l|c|c|c|c|c|c|}
\hline
 & \multicolumn{3}{c|}{\bf SimBA} & \multicolumn{3}{c|}{\bf Bandits-TD} \\ \cline{2-7}
 \bf Model & \bf \makecell{Mean \\ Queries} & \bf \makecell{Median \\ Queries} & \bf \makecell{Attack \\ Success rate} & \bf \makecell{Mean \\ Queries} & \bf \makecell{Median \\ Queries} & \bf \makecell{Attack \\ Success rate} \\
\hline
Baseline ResNet-50 & 357.31 & 302 & 100.00 \% & 564.22 & 524 & 87.93 \% \\ \hline
MBGNN, concat & 508.53 & 388 & 99.79 \% & 768.86 & 620 & 87.77 \% \\
MBGNN, sum & 520.46 & \bf 394 & 99.89 \% &  708.55 & 638 & 87.74 \% \\
MBGNN, dropfeat & \bf 572.15 & 387 & \bf 99.68 \% & \bf 787.69 & \bf 659 & \bf 87.66 \% \\ \hline
Attn MBGNN, concat & \bf 415.11 & \bf 342 & 99.89 \% & \bf 613.15 & 574 & 87.92 \% \\
Attn MBGNN, sum & 358.88 & 319 & 100.00 \% & 558.31 & 542 & 87.94 \% \\
Attn MBGNN, dropfeat & 392.19 & 338 & \bf 99.79 \% & 590.17 & \bf 580 & \bf 87.84 \% \\
\hline
\end{tabular}
\end{center}
\caption{Black-box adversarial attack results for the baseline and the proposed models using simBA \cite{Guo2019SimpleBA} and Bandits-TD \cite{ilyas2018prior} on the CIFAR10 dataset. A higher number of queries is better, and a lower attack success rate is better. Note that the mean and median number of queries is calculated over successful attacks only.}
\label{table:attacks}
\end{table*}

\begin{figure*}[hbt!]
\begin{center}
    \begin{subfigure}[b]{0.45\textwidth}
        \centering
        \includegraphics[width=\textwidth]{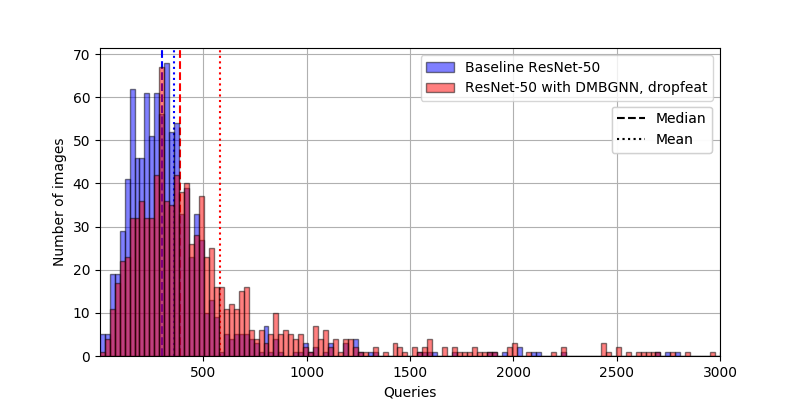}
        \caption{}
    \end{subfigure}
    \begin{subfigure}[b]{0.45\textwidth}
        \centering
        \includegraphics[width=\textwidth]{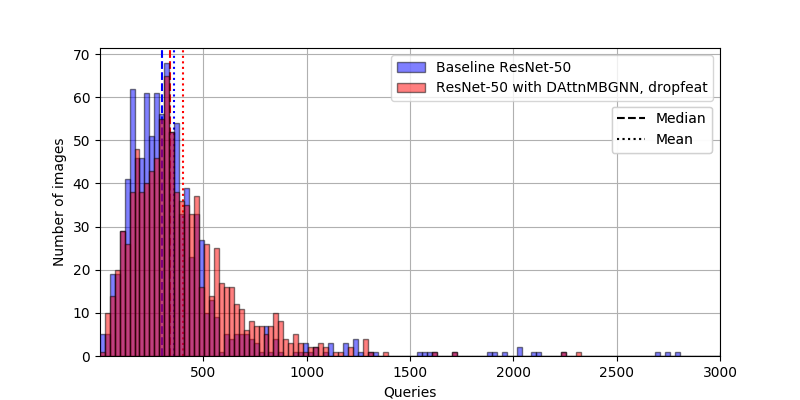}
        \caption{}
    \end{subfigure} \\
    \begin{subfigure}[b]{0.45\textwidth}
        \centering
        \includegraphics[width=\textwidth]{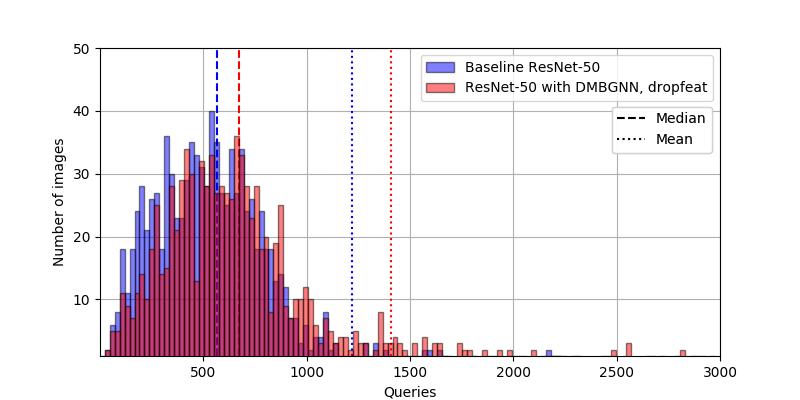}
        \caption{}
    \end{subfigure}
    \begin{subfigure}[b]{0.45\textwidth}
        \centering
        \includegraphics[width=\textwidth]{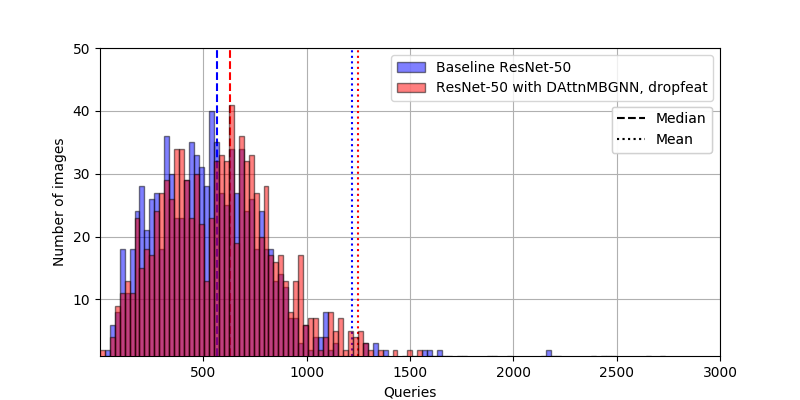}
        \caption{}
    \end{subfigure}
    \caption{Histograms of the number of queries required until a successful attack (over 1000 target images) on the CIFAR10 dataset using dropfeat MBGNN and Attn-MBGNN for simBA (top) and Bandits-TD (bottom). The queries axis is limited to 3000 queries for clarity of presentation. Models using MBGNNs (red) require more queries on average for a successful attack as compared to the baseline model (blue). The plots for other model variations are provided in the supplementary material.}
    \label{fig:attack_query_plots}
\end{center}
\end{figure*}

In order to test the robustness of MBGNNs to image perturbations, we first consider random (pixel-wise) Gaussian noise and local Gaussian blurring on input images, with varying levels of corruption severity. The evaluation of test accuracy is done via transductive testing, where a mini-batch consists of a single corrupted image along with a training set of uncorrupted images. The class label prediction made by the model for the corrupted image is compared against the true label. Figure \ref{fig:blur_plots} shows plots of test accuracy, measured in the manner described above, for the best performing variations of MBGNN (sum) and Attn-MBGNN (sum) on the CIFAR-10 dataset for different levels of corruption severity. We observe that models using MBGNNs are far better at accommodating the effects of local perturbations to the images (Fig. \ref{fig:blur_plots} first row) and that although the effects of local Gaussian blur are less harmful, models based on MBGNNs are still better by about 2\% over the range of corruption severities we have considered (Fig. \ref{fig:blur_plots} second row).


We also test the robustness of the model to two recently proposed and popular black-box adversarial attack methods, simBA \cite{Guo2019SimpleBA}, and Bandits-TD \cite{ilyas2018prior}. Table \ref{table:attacks} shows the mean and median number of queries before a successful attack, and the attack success rate, for different MBGNN models for the two attack methods. The MBGNN models have slightly lower attack success rates and higher mean/median queries before a successful attack when compared to the baseline ResNet model. The large increase in mean queries can be attributed to a heavier tail in the distribution of queries, as can be seen from the histogram plots in Figure \ref{fig:attack_query_plots}, which show results for the best performing variations of MBGNN (dropfeat) and Attn-MBGNN (dropfeat). Plots for other variations are provided in the supplementary material. The Attention MBGNN models outperform the baseline model but do not perform as well as the MBGNN models. One reason for this might be that any perturbation in the input samples has a compounding effect on calculating the attention weights and, therefore, the aggregation.

\subsection{GAN training using MC-MBGNN}
\label{sec:modecollapse}

\begin{figure*}[h!]
\begin{center}
    \begin{subfigure}[b]{0.4\textwidth}
        \centering
        \includegraphics[width=\textwidth]{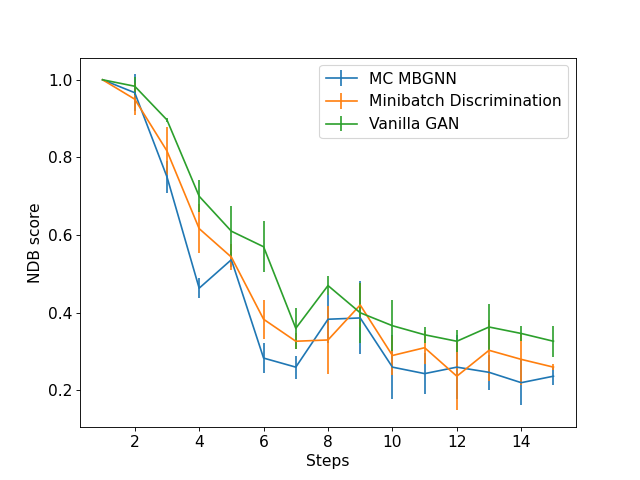}
        \caption{CelebA}
    \end{subfigure}
    \begin{subfigure}[b]{0.4\textwidth}
        \centering
        \includegraphics[width=\textwidth]{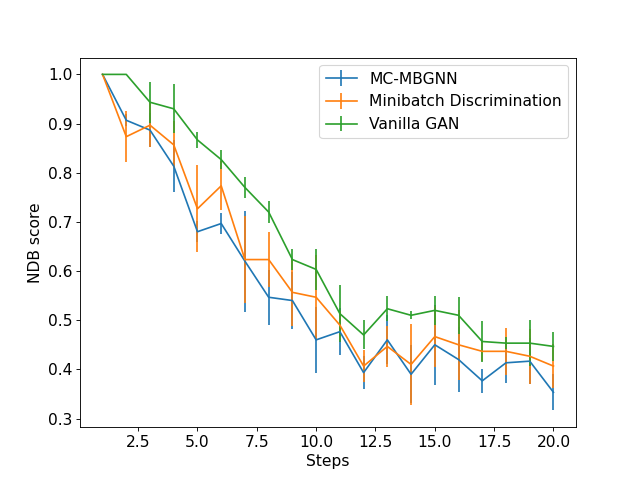}
        \caption{CIFAR 10}
    \end{subfigure}
    \caption{Plots of NDB scores for 100 bins over the steps, where a step refers to 500 training iterations. Lower NDB scores imply higher sample diversity. We provide the confidence of the NDB values over 5 runs. We take 160 batches of real training images and 40 batches of fake generated images with a batch size of 128 to estimate the true statistics of the dataset and reduce the NDB test time.}
    \label{fig:GAN_NDB}
\end{center}
\end{figure*}

Lastly, we provide results for training GANs using MC-MBGNN, and compare this strategy with both a vanilla GAN and minibatch discrimination \cite{DBLP:journals/corr/SalimansGZCRC16}. We test the diversity of our generated samples using the technique of Number of statistically Different Bins (NDB), which was proposed in \cite{richardson2018gans} as a metric to quantify mode collapse in GANs. To compute this metric, we first cluster the training dataset in $K$ different bins and then allocate the generated images in these bins based on their proximity to the centroid of each bin. Then, we measure the statistical similarity between the real and fake images in each of the bins and compute the fraction of statistically different bins that give us the NDB score. In the case of mode collapse, the number of statically different bins is close to $K$, and the NDB score is close to $1$. Using the NDB score as the metric, in Figure \ref{fig:GAN_NDB}, we show that our proposed MC-MBGNN helps the generator learn faster and generate more diverse samples, which is substantiated by lower NDB scores. We provide the details of the architecture, experiment setup and generated images in the Supplementary material.

\section{Discussion}
The use of an MBGNN shows promise for computer vision tasks, and in particular, image classification. It improves raw performance and adds robustness to adversarial attacks and image perturbations, with low computational overhead. There are certain practical limitations of our method in its present form for training on large-scale datasets, which could be addressed in future work. First, from an implementation standpoint, parallelization requires further development. In our experiments, each mini-batch was handled by the same GPU. Second, the mini-batch size also needs to be at least a few times larger than the number of classes to ensure that multiple samples belong to the same class within a mini-batch during training.
MBGNNs can be used with the most popular network models and require a modification of only their last layer. As such, we anticipate that they could find a use for diverse computer vision tasks beyond classification, such as segmentation, region proposal detection, image captioning, and relationship modeling.


\bibliographystyle{ieee_fullname}
\bibliography{references}
\clearpage
\section{Supplementary Material}
\subsection{Results for WideResnet}
Table \ref{table:supervised_wideresnet} provides results for our MBGNN with all the combine options, using Wide ResNet-28-10 as the encoder network. We also use this network for all the baselines in this table. We observe a consistent improvement across all datasets, which is consistent with the results of our experiments using ResNet-50 in the main paper.


\begin{table*}[h]
\small
\begin{tabular}{|l|c|c|c|c|c|c|}
\hline
 & \multicolumn{2}{c|}{\bf CIFAR 10} & \multicolumn{2}{c|}{\bf CIFAR 100} & \multicolumn{2}{c|}{\bf MIT 67} \\ \cline{2-7}
 \bf Model & \bf Inductive     & \bf Transductive  & \bf Inductive    & \bf Transductive    & \bf Inductive   & \bf Transductive  \\ \hline
Supervised vanilla           & \multicolumn{2}{c|}{$95.62 \scriptstyle{\pm 0.14}$} & \multicolumn{2}{c|}{$79.58 \scriptstyle{\pm 0.20}$} &  \multicolumn{2}{c|}{$66.20 \scriptstyle{\pm 0.19}$}             
\\
Supervised contrastive \cite{khosla2020supervised}      & \multicolumn{2}{c|}{$95.91 \scriptstyle{\pm 0.16}$}  &\multicolumn{2}{c|}{$80.15 \scriptstyle{\pm 0.15}$} &         \multicolumn{2}{c|}{$66.89 \scriptstyle{\pm 0.17}$}        
\\
Affinity supervision \cite{wang2020affinity}      & \multicolumn{2}{c|}{$95.59 \scriptstyle{\pm 0.18}$} &\multicolumn{2}{c|}{$79.8 \scriptstyle{\pm 0.19}$} & \multicolumn{2}{c|}{$66.8 \scriptstyle{\pm 0.16}$}        
\\ \hline
MBGNN (concat)            &  $96.02 \scriptstyle{\pm 0.21}$  &   $96.05 \scriptstyle{\pm 0.20}$   &  $80.18 \scriptstyle{\pm 0.18}$   &      $80.20 \scriptstyle{\pm 0.18}$  &  $66.87 \scriptstyle{\pm 0.22}$       &  $66.90 \scriptstyle{\pm 0.21}$  \\
MBGNN (sum)               & $95.78 \scriptstyle{\pm 0.15}$   &   $95.80 \scriptstyle{\pm 0.14}$   & $79.85 \scriptstyle{\pm 0.17}$   &    $79.88 \scriptstyle{\pm 0.15}$   & $66.52 \scriptstyle{\pm 0.18}$       &  $66.51 \scriptstyle{\pm 0.17}$    \\
MBGNN (dropfeat)          & $\bf 96.14 \scriptstyle{\pm 0.16}$   &   $\bf 96.17 \scriptstyle{\pm 0.18}$  & $80.46 \scriptstyle{\pm 0.21}$  &     $80.45 \scriptstyle{\pm 0.20}$  & $67.10 \scriptstyle{\pm 0.19} $      &  $67.12 \scriptstyle{\pm 0.20}$
\\ \hline
Attn-MBGNN (concat)       & $95.95 \scriptstyle{\pm 0.15}$   &   $95.95 \scriptstyle{\pm 0.18}$  & $80.22 \scriptstyle{\pm 0.20}$  &    $80.23 \scriptstyle{\pm 0.19}$ &  $66.96 \scriptstyle{\pm 0.18}$       & $66.98 \scriptstyle{\pm 0.18}$
\\
Attn-MBGNN (sum)          & $95.89 \scriptstyle{\pm 0.19}$   &   $95.91 \scriptstyle{\pm 0.18}$   & $80.02 \scriptstyle{\pm 0.20}$  &    $ 80.06 \scriptstyle{\pm 0.18}$ & $66.67 \scriptstyle{\pm 0.16}$       &  $66.69 \scriptstyle{\pm 0.17}$
\\
Attn-MBGNN (dropfeat)     & $96.12 \scriptstyle{\pm 0.20}$    &  $96.14 \scriptstyle{\pm 0.21}$  & $\bf 80.76 \scriptstyle{\pm 0.17}$  &    $\bf 80.76 \scriptstyle{\pm 0.15}$ &   $\bf 67.25 \scriptstyle{\pm 0.22}$       &  $\bf 67.26 \scriptstyle{\pm 0.20} $       
\\ \hline
\end{tabular}
\caption{Image classification results using a Wide ResNet-28-10 encoder. The architectures are trained using a batch size of 256 and with $k$ = 16 for CIFAR-10, and $k$ = 4 for CIFAR-100 and MIT67. We provide results for different combine modes of our single layered mini-batch graph based models (rows 4-9).}
\label{table:supervised_wideresnet}
\end{table*}

\subsection{Additional robustness results}

\subsubsection{Random Perturbations}
We provide results for Gaussian noise and Gaussian blurring perturbation for all the different variations of the MBGNN models, using the ResNet-50 encoder on the CIFAR-10 dataset. Figure \ref{fig:samples_noisy} and \ref{fig:samples_blur} show some sample CIFAR-10 images, with different levels of corruption severity for visualization. For both Gaussian noise and Gaussian blurring we define corruption severity as the standard deviation, $\sigma$ used when sampling from the Gaussian distribution, with higher values of $\sigma$ corresponding to increased corruption.

Figure \ref{fig:noise_plots} shows the average test accuracy for different levels of Gaussian noise and Figure \ref{fig:blur_plots_2} shows the average test accuracy for different levels of Gaussian blurring. In both figures, the top row shows plots for MBGNN models (concat, sum and dropfeat) and the bottom row shows plots for Attn-MBGNN models (concat, sum and dropfeat). Models using MBGNNs (purple) maintain higher accuracy over the entire range of corruption severities as compared to the baseline model (blue) and show a lower drop in accuracy for higher corruption levels. We also observe that the sum combination method generally performs better than the other variations.

\subsubsection{Black-box Adversarial Attacks}
We provide histogram plots of the number of queries required until a successful attack (over 1000 images) for all the variations of the MBGNN model. Figure \ref{fig:simba_query_plots} and \ref{fig:bandits_query_plots} show the plots for SimBA \cite{Guo2019SimpleBA} and Bandits-TD \cite{ilyas2018prior}, respectively. The dashed lines indicate the median value and the dotted lines indicate the mean value for the different models. In both figures, the top row shows plots for MBGNN models (concat, sum and dropfeat) and the bottom row shows plots for Attn-MBGNN models (concat, sum and dropfeat). The performance across the different combination methods is similar, although dropfeat models generally have a higher number of mean and median queries, due to the heavier tail in the distribution. Also, the MBGNN models with attention mechanism have a lower number of mean and median queries on average than the models without attention.

\clearpage
\subsection{Details of the GAN architecture}
We use standard generator and discriminator architectures for our GAN model. Let us denote the following operations,\\ 
Basic convolution: $Conv(in\_channels, out\_channels, filter\_size, stride, padding)$,\\ Linear layer: $Linear(in\_dim, out\_dim)$,\\ Deconvolution: $Conv\_trans(in\_channels, out\_channels, filter\_size, stride, padding, output\_padding)$\\
The generator architecture using the above notation, is given by\\
$Linear(128,16384) - batch\_norm - ReLU\\
Conv\_trans( 1024, 512, 5, 2, 2, 1) - batch\_norm - ReLU\\ 
Conv\_trans( 512, 256, 5, 2, 2, 1) - batch\_norm - ReLU\\
Conv\_trans( 256, 128, 5, 2, 2, 1) - batch\_norm - ReLU\\
Conv\_trans( 128, 64, 5, 2, 2, 1) - batch\_norm - ReLU\\
Conv\_trans( 64, 3, 5, 1, 2, 0) - tanh$\\
The discriminator architecture is given by\\
$Conv( 3, 64, 4, 2, 1) - batch\_norm - LeakyReLU\\
Conv( 64, 128, 4, 2, 1) - batch\_norm - LeakyReLU\\
Conv( 128, 256, 4, 2, 1) - batch\_norm - LeakyReLU\\
Conv( 256, 512, 4, 2, 1) - batch\_norm - LeakyReLU\\
Linear(8192,1) - sigmoid$\\
This defines the GAN architecture for the CelebA dataset. For CIFAR-10, we slightly modify the $Linear(\cdot)$ layers to adjust to the reduced image sizes. We add both the minibatch discrimination and MC-MBGNN layers before the final $Linear(\cdot)$ layer of the discriminator. For the sake of our experiments, we use the Adam optimizer with a learning rate of $10^{-4}$ for the discriminator and $2 \times 10^{-4}$ for the generator, with a batch size of $128$. We provide the generated images using different discriminator architectures in Figure \ref{fig:gen_images} and Figure \ref{fig:gen_images_2}. We train the model on CIFAR-10 for $10000$ iterations and on CelebA for $7500$ iterations.


\clearpage

\begin{figure*}[hbt!]
\begin{center}
    \centering
    \includegraphics[width=0.7\textwidth]{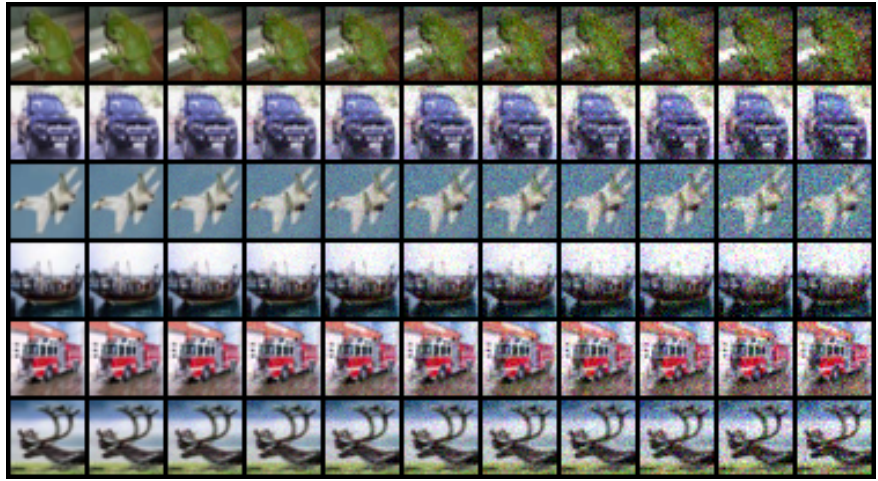}
    \caption{Sample images from the CIFAR-10 dataset with each column showing an increasing level of corruption severity, for pixelwise Gaussian noise.}
    \label{fig:samples_noisy}
\end{center}
\end{figure*}

\begin{figure*}[hbt!]
\begin{center}
    \begin{subfigure}[b]{0.33\textwidth}
        \centering
        \includegraphics[width=\textwidth]{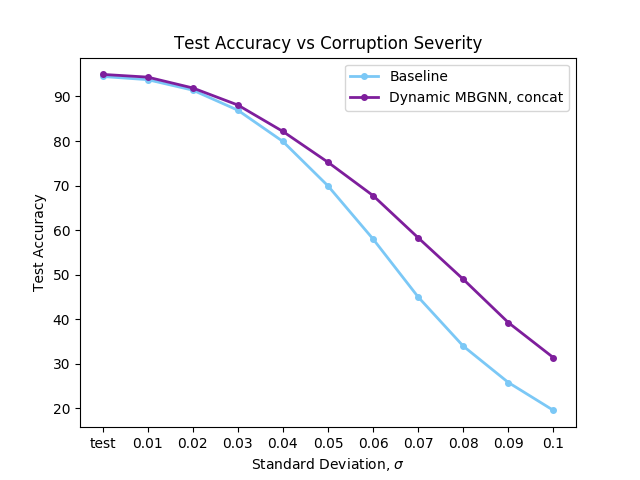}
        \caption{}
    \end{subfigure}
    \begin{subfigure}[b]{0.33\textwidth}
        \centering
        \includegraphics[width=\textwidth]{images/robustness/gnoise_dmbgnn_sum.png}
        \caption{}
    \end{subfigure}
    \begin{subfigure}[b]{0.33\textwidth}
        \centering
        \includegraphics[width=\textwidth]{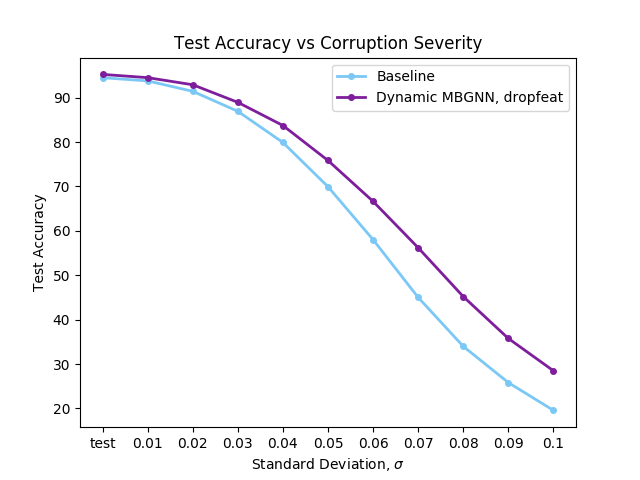}
        \caption{}
    \end{subfigure} \\
    \begin{subfigure}[b]{0.33\textwidth}
        \centering
        \includegraphics[width=\textwidth]{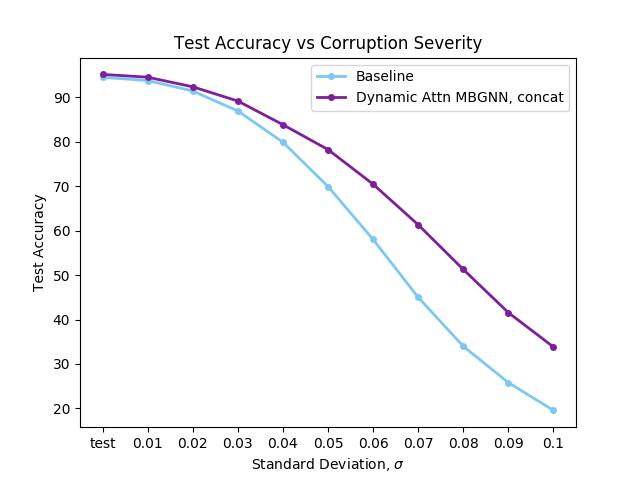}
        \caption{}
    \end{subfigure}
    \begin{subfigure}[b]{0.33\textwidth}
        \centering
        \includegraphics[width=\textwidth]{images/robustness/gnoise_dattnmbgnn_sum.png}
        \caption{}
    \end{subfigure}
    \begin{subfigure}[b]{0.33\textwidth}
        \centering
        \includegraphics[width=\textwidth]{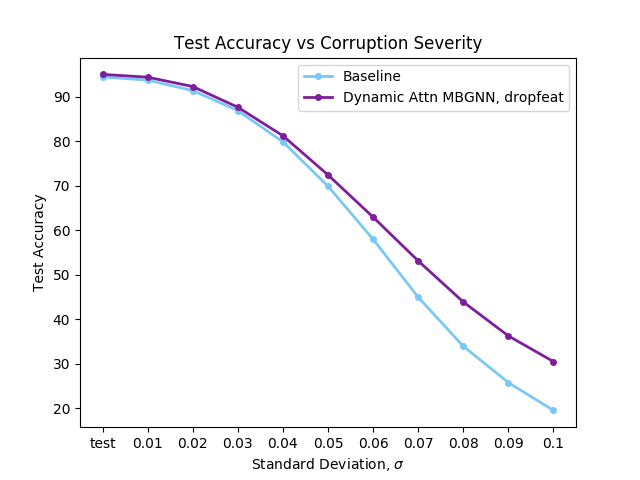}
        \caption{}
    \end{subfigure}
    \caption{Average test accuracy at different corruption severities for Gaussian noise on CIFAR10, using ResNet-50 with MBGNNs (top) and ResNet-50 with Attention MBGNNs (bottom). Models using MBGNNs (purple) maintain higher accuracy over the entire range of corruption severities as compared to the baseline model (blue), and show a lower drop in accuracy for higher corruption levels.}
    \label{fig:noise_plots}
\end{center}
\end{figure*}

\clearpage

\begin{figure*}[hbt!]
\begin{center}
    \centering
    \includegraphics[width=0.7\textwidth]{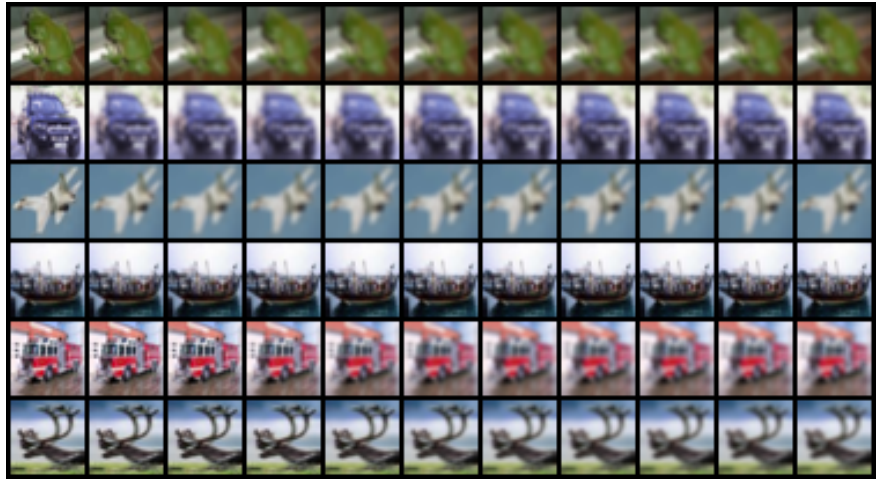}
    \caption{Sample images from CIFAR-10 dataset with each column showing increasing level of corruption severity for Gaussian blurring.}
    \label{fig:samples_blur}
\end{center}
\end{figure*}

\begin{figure*}[hbt!]
\begin{center}
    \begin{subfigure}[b]{0.33\textwidth}
        \centering
        \includegraphics[width=\textwidth]{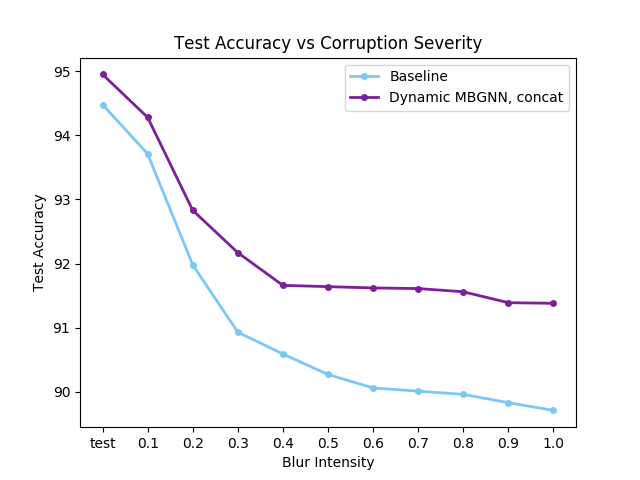}
        \caption{}
    \end{subfigure}
    \begin{subfigure}[b]{0.33\textwidth}
        \centering
        \includegraphics[width=\textwidth]{images/robustness/blur_dmbgnn_sum.png}
        \caption{}
    \end{subfigure}
    \begin{subfigure}[b]{0.33\textwidth}
        \centering
        \includegraphics[width=\textwidth]{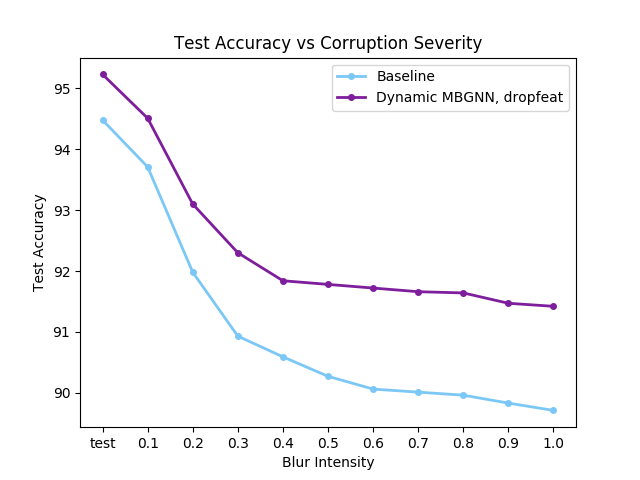}
        \caption{}
    \end{subfigure} \\
    \begin{subfigure}[b]{0.33\textwidth}
        \centering
        \includegraphics[width=\textwidth]{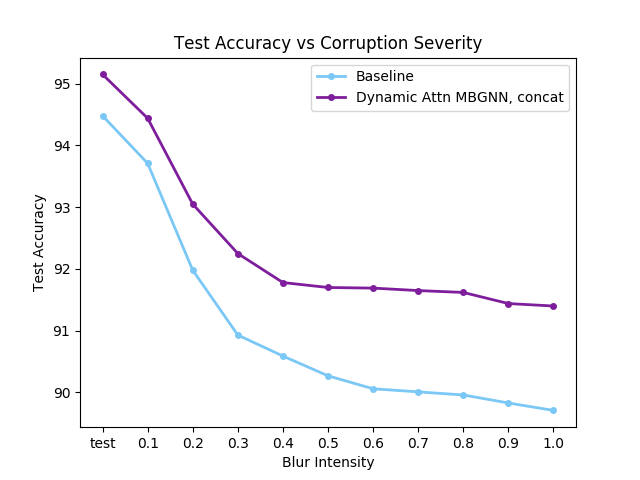}
        \caption{}
    \end{subfigure}
    \begin{subfigure}[b]{0.33\textwidth}
        \centering
        \includegraphics[width=\textwidth]{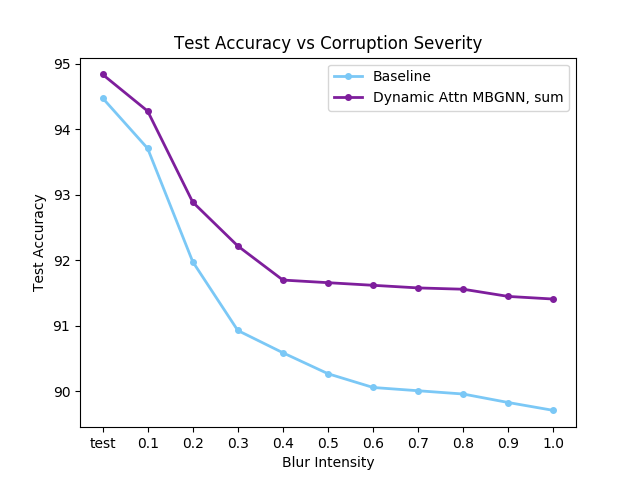}
        \caption{}
    \end{subfigure}
    \begin{subfigure}[b]{0.33\textwidth}
        \centering
        \includegraphics[width=\textwidth]{images/robustness/blur_dattnmbgnn_dropfeat.png}
        \caption{}
    \end{subfigure}
    \caption{Average test accuracy at different corruption severities for Gaussian blurring on CIFAR10, using ResNet-50 with MBGNNs (top) and ResNet-50 with Attention MBGNNs (bottom). Models using MBGNNs (purple) maintain higher accuracy over the range of corruption severities as compared to baseline model (blue) and have lower drop in accuracy for higher corruption levels.}
    \label{fig:blur_plots_2}
\end{center}
\end{figure*}

\clearpage

\begin{figure*}[hbt!]
\begin{center}
    \begin{subfigure}[b]{0.33\textwidth}
        \centering
        \includegraphics[width=\textwidth]{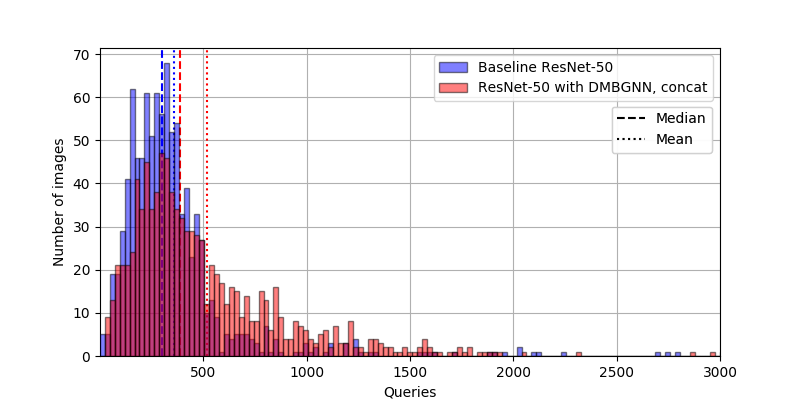}
        \caption{}
    \end{subfigure}
    \hfill
    \begin{subfigure}[b]{0.33\textwidth}
        \centering
        \includegraphics[width=\textwidth]{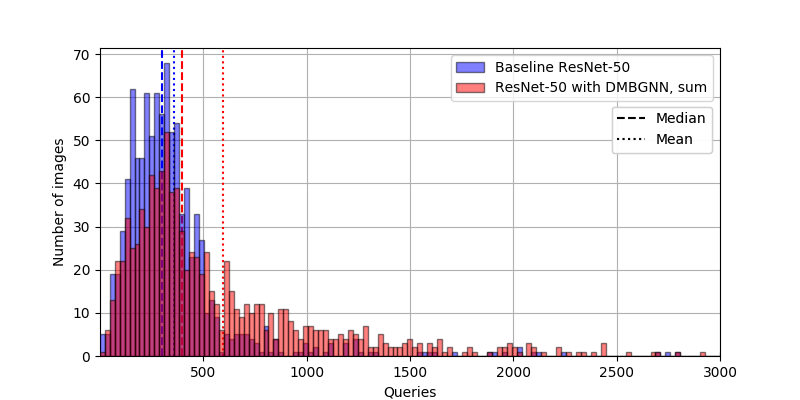}
        \caption{}
    \end{subfigure}
    \hfill
    \begin{subfigure}[b]{0.33\textwidth}
        \centering
        \includegraphics[width=\textwidth]{images/robustness/simba_dmbgnn_dropfeat.png}
        \caption{}
    \end{subfigure} \\
    \begin{subfigure}[b]{0.33\textwidth}
        \centering
        \includegraphics[width=\textwidth]{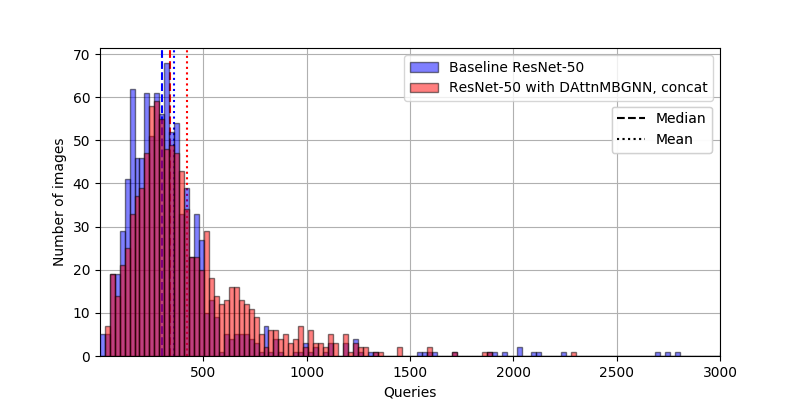}
        \caption{}
    \end{subfigure}
    \hfill
    \begin{subfigure}[b]{0.33\textwidth}
        \centering
        \includegraphics[width=\textwidth]{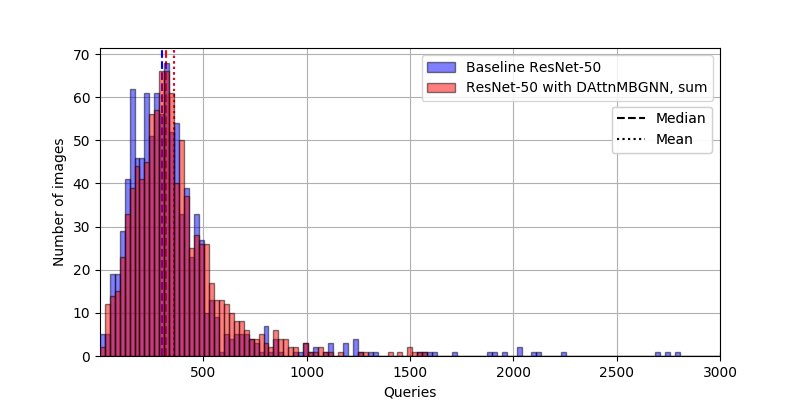}
        \caption{}
    \end{subfigure}
    \hfill
    \begin{subfigure}[b]{0.33\textwidth}
        \centering
        \includegraphics[width=\textwidth]{images/robustness/simba_dattnmbgnn_dropfeat.png}
        \caption{}
    \end{subfigure}
    \caption{Histogram of number of queries required until a successful attack (over 1000 target images) using simBA on the CIFAR10 dataset. The queries axis is limited to 3000 queries for clarity of presentation. Models using MBGNNs (red) require a larger number of queries on average for a successful, attack as compared to the baseline model (blue). (a) MBGNN, concat, (b) MBGNN, sum, (c) MBGNN, dropfeat, (d) Attn-MBGNN, concat, (e) Attn-MBGNN, sum, (f) Attn-MBGNN, dropfeat.}
    \label{fig:simba_query_plots}
\end{center}
\end{figure*}

\begin{figure*}[hbt!]
\begin{center}
    \begin{subfigure}[b]{0.33\textwidth}
        \centering
        \includegraphics[width=\textwidth]{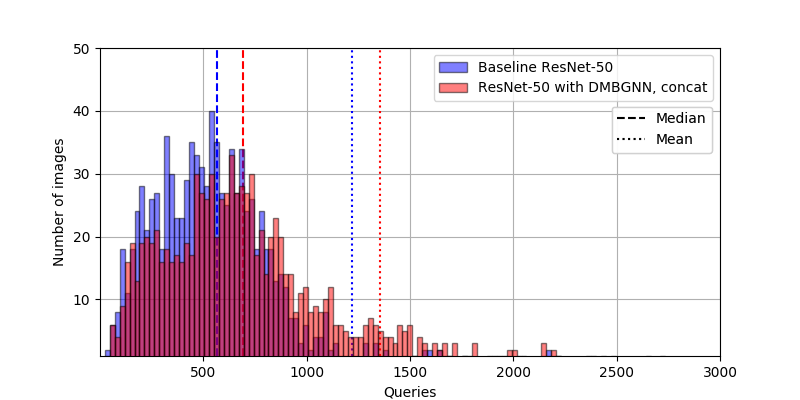}
        \caption{}
    \end{subfigure}
    \hfill
    \begin{subfigure}[b]{0.33\textwidth}
        \centering
        \includegraphics[width=\textwidth]{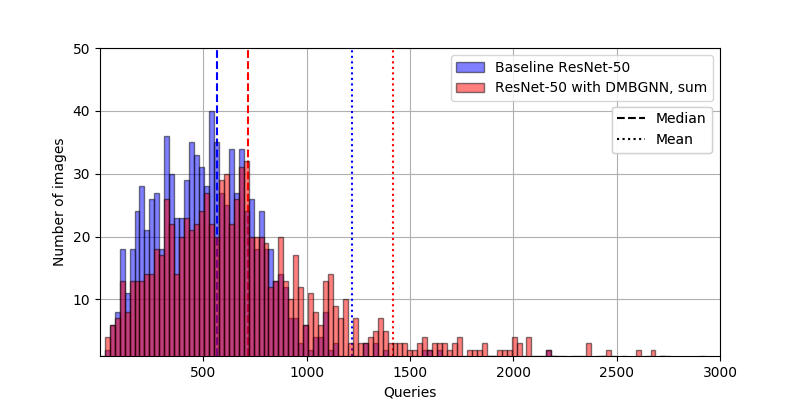}
        \caption{}
    \end{subfigure}
    \hfill
    \begin{subfigure}[b]{0.33\textwidth}
        \centering
        \includegraphics[width=\textwidth]{images/robustness/bandit_dmbgnn_dropfeat.png}
        \caption{}
    \end{subfigure} \\
    \begin{subfigure}[b]{0.33\textwidth}
        \centering
        \includegraphics[width=\textwidth]{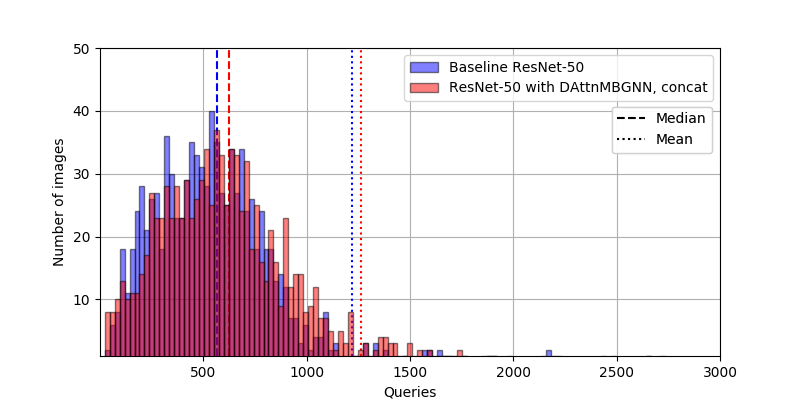}
        \caption{}
    \end{subfigure}
    \hfill
    \begin{subfigure}[b]{0.33\textwidth}
        \centering
        \includegraphics[width=\textwidth]{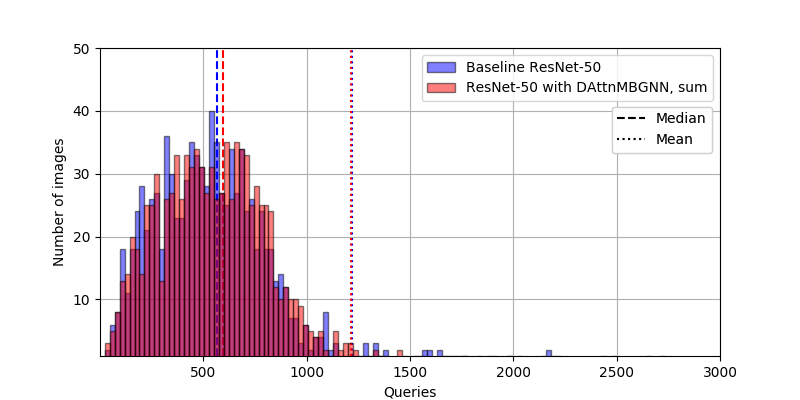}
        \caption{}
    \end{subfigure}
    \hfill
    \begin{subfigure}[b]{0.33\textwidth}
        \centering
        \includegraphics[width=\textwidth]{images/robustness/bandit_dattnmbgnn_dropfeat.png}
        \caption{}
    \end{subfigure}
    \caption{Histogram of number of queries required until a successful attack (over 1000 target images) using Bandits-TD on the CIFAR10 dataset. The queries axis is limited to 3000 queries for clarity of presentation. Models using MBGNNs (red) require a larger number of queries on average for a successful attack, as compared to the baseline model (blue). (a) MBGNN, concat, (b) MBGNN, sum, (c) MBGNN, dropfeat, (d) Attn-MBGNN, concat, (e) Attn-MBGNN, sum, (f) Attn-MBGNN, dropfeat.}
    \label{fig:bandits_query_plots}
\end{center}
\end{figure*}

\clearpage

\begin{figure*}[hbt!]
\begin{center}
    \begin{subfigure}[b]{0.45\textwidth}
        \centering
        \includegraphics[width=\textwidth]{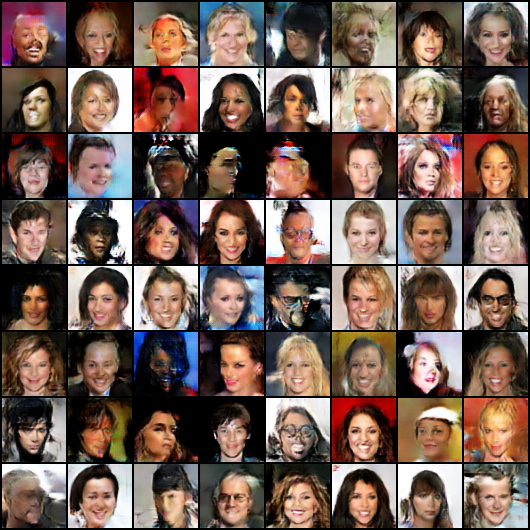}
        \caption{Minibatch Discrimination}
    \end{subfigure}
    \hfill
    \begin{subfigure}[b]{0.45\textwidth}
        \centering
        \includegraphics[width=\textwidth]{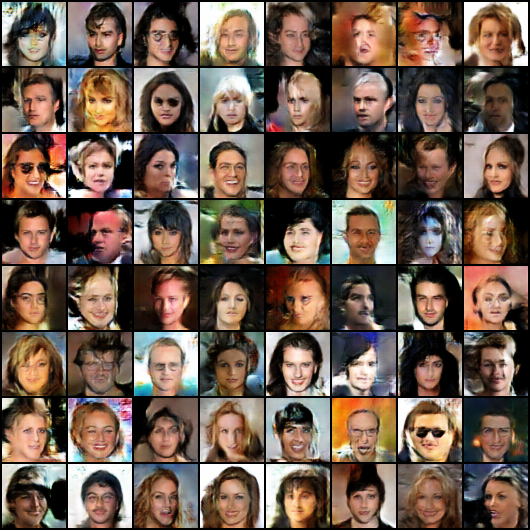}
        \caption{Proposed MC-MBGNN}
    \end{subfigure}
    \caption{Samples generated by the Generator for the CelebA dataset.}
    \label{fig:gen_images}
\end{center}
\end{figure*}

\begin{figure*}[hbt!]
\begin{center}
    \begin{subfigure}[b]{0.45\textwidth}
        \centering
        \includegraphics[width=\textwidth]{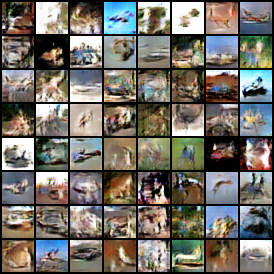}
        \caption{Minibatch Discrimination}
    \end{subfigure}
    \hfill
    \begin{subfigure}[b]{0.45\textwidth}
        \centering
        \includegraphics[width=\textwidth]{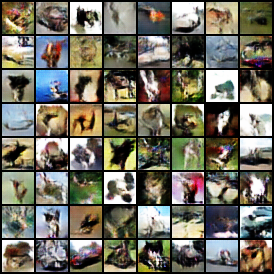}
        \caption{Proposed MC-MBGNN}
    \end{subfigure}
    \caption{Samples generated by the Generator for the CIFAR-10 dataset.}
    \label{fig:gen_images_2}
\end{center}
\end{figure*}

\clearpage

\end{document}